\documentclass[sigconf,nonacm,natbib=true]{acmart}
\setcopyright{none}
\renewcommand\footnotetextcopyrightpermission[1]{}

\usepackage[utf8]{inputenc}
\usepackage{graphicx}
\usepackage{svg}
\usepackage{amsmath}
\usepackage{amsthm}
\usepackage{thmtools,thm-restate}
\usepackage{multicol, blindtext}
\usepackage[linesnumbered,boxed,noend, noline]{algorithm2e}
\usepackage{multirow}
\usepackage{enumitem}

\newtheorem{assumption}{Assumption}

\usepackage{subfig}
\usepackage[noend]{algpseudocode}
\usepackage{cancel}
\usepackage[dvipsnames]{xcolor}
\usepackage{todonotes}
\algdef{SE}[DOWHILE]{Do}{doWhile}{\algorithmicdo}[1]{\algorithmicwhile\ #1}%

\acmConference[KDD '26]{ACM SIGKDD Conference on Knowledge Discovery and Data Mining}{August 09--13, 2026}{Jeju, South Korea}
\acmISBN{978-1-4503-XXXX-X/2026/08}

\title{Stochastic Primal-Dual Decoding for Multiobjective Generative Recommender Systems}

\author{Dmitrii Moor}
\orcid{0009-0000-4582-7931}
\email{dmitriim@spotify.com}
\affiliation{%
   \institution{Spotify}
   \city{London}
   \country{United Kingdom}
 }

\author{Ben Carterette}
\email{benjaminc@spotify.com}
\affiliation{%
   \institution{Spotify}
   \city{New York}
   \country{USA}
 }

\author{Senthilkumar Krishnamoorthy}
\email{senthilkumark@spotify.com}
\affiliation{%
   \institution{Spotify}
   \city{New York}
   \country{USA}
 }
 
 \author{Kyle Kretschman}
\email{kylek@spotify.com}
\affiliation{%
   \institution{Spotify}
   \city{New York}
   \country{USA}
 }

\author{Denis Beslic}
\email{denisb@spotify.com}
\affiliation{%
   \institution{Spotify}
   \city{New York}
   \country{USA}
 }

\author{Melissa Yalla}
\email{melissamcguirl@spotify.com}
\affiliation{%
   \institution{Spotify}
   \city{New York}
   \country{USA}
 }

\author{Alice Y Wang}
\email{alicew@spotify.com}
\affiliation{%
   \institution{Spotify}
   \city{New York}
   \country{USA}
 }
 
 \author{Mounia Lalmas}
\email{mounia@acm.org}
\affiliation{%
   \institution{Spotify}
   \city{London}
   \country{United Kingdom}
 }

\renewcommand{\shortauthors}{Dmitrii Moor et al.}

\begin{document}

\begin{abstract} 

Recent advances in recommender systems (RS) have shown substantial performance gains through generative modelling. In practice, recommendation often involves constructing slates--ordered lists of items--that must satisfy multiple objectives beyond relevance, such as constraints defined over item attributes or fairness constraints. 
Existing multiobjective approaches either rely on post-processing techniques designed for non-generative settings, or incorporate auxiliary objectives directly into model training. 
The former does not explicitly account for the sequential nature of generative RS, while the latter is often impractical in large-scale systems.

We propose a lightweight, inference-time decoding layer that augments autoregressive generative RS to support multiobjective slate generation without modifying or retraining the underlying model.
We formulate decoding as an online constrained optimisation problem, where items are selected sequentially, and trade-offs between relevance and auxiliary objectives are adjusted dynamically based on the remaining constraint slack, i.e., how much of each objective remains to be satisfied. 
This is implemented via a stochastic primal-dual approximation scheme that balances relevance and auxiliary objectives during generation.

We provide theoretical guarantees on constraint violation and regret, and evaluate the proposed approach through extensive offline experiments and a large-scale online A/B experiment in a real-world recommender system. Our results show consistent improvements in multiobjective trade-offs, including a +1.8\% gain in the auxiliary objectives achieved at zero cost to user satisfaction.

\end{abstract}



\maketitle

\section{Introduction} 

Recent advances in generative modelling have led to remarkable progress in text generation, enabling applications such as ChatGPT that have transformed how people interact with information~\cite{Brown_2020}. 
This success has motivated a growing body of work exploring generative models in recommender systems (RS). An increasing number of studies~\cite{sasrec_paper, transact, BERT4Rec, Raffel_2020, Zhai_2024, FISSA, Deldjoo_2024} investigate the construction of recommendation slates using deep autoregressive (AR) generative models, typically parameterised by transformer-based architectures~\cite{Vaswani_NIPS2017}.
These models factorise user-item interactions into sequences of conditional probabilities, enabling slates to be generated autoregressively based on a user’s interaction history. Empirical evidence from offline and online evaluations shows that such approaches can significantly improve user satisfaction in RS~\cite{transact, xia2025transactv2lifelonguser, sasrec_paper}.



In real-world RS, recommendation slates must satisfy multiple objectives beyond user satisfaction. In addition to engagement, systems often need to account for considerations such as fairness, diversity, or constraints defined over item attributes, such as requiring a minimum presence of items from different categories or with specific novelty properties. 
This requires careful coordination to meet multiple objectives while maintaining a high-quality user experience \cite{Anderson_2020, Mehrotra2021}.

These challenges are well recognised in the RS community~\cite{Zheng_2021}. 
Existing approaches to multiobjective recommendation generally follow two directions. A large body of work addresses multiobjective optimisation in non-generative settings, relying on post-hoc re-ranking or scalarisation techniques that assume item independence~\cite{Mehrotra2018, Mehrotra2021, Mehrotra2020}. More recently, several studies have incorporated auxiliary objectives directly into the training of generative models~\cite{gao2025futureconditionedrecommendationsmultiobjectivecontrollable, Zhang_2024}. While effective in their respective settings, these approaches involve trade-offs when applied to autoregressive generative RS. In particular, non-generative methods fail to capture sequential dependencies~\cite{Meng_2025}, while approaches that encode objectives directly into model parameters become inflexible when objectives or item attributes change and retraining is costly.



In this work, we complement these approaches by introducing a lightweight decoding strategy that augments a wide range of autoregressive generative RS. The proposed decoder operates entirely at inference time and can be applied to existing deployed models without modifying their training procedure. It extends the final scoring stage of transformer-based models trained on user-item interaction data~\cite{sasrec_paper, Zhai_2024, transact} by combining relevance logits with item-level objective signals, guiding the autoregressive selection of items toward desired trade-offs.
This realises multiobjective control directly at the level of decoding, enabling step-wise, constraint-aware adjustment of objective trade-offs as the slate is generated.

To formalise this decoding mechanism, we adopt an \textit{online} constrained optimisation perspective. 
We then view generative decoding as a sequential decision process in which items are selected one at a time and multiple objectives must be satisfied over the course of constructing a recommendation slate. 
In this view, the generative model defines an evolving optimisation \emph{environment} through step-wise relevance scores. The decoder tracks the \emph{slack} of each objective--the remaining gap to its target--and adjusts objective weights accordingly: Objectives with larger slack are prioritised, while those close to satisfaction are gradually down-weighted. This adaptive reweighting is implemented via Lagrangian multipliers updated online using a primal-dual formulation.


The proposed primal-dual decoding strategy provides a principled mechanism for balancing multiple objectives with theoretical robustness guarantees. It is particularly well suited to the generative setting, where relevance distributions evolve across decoding steps, and it exhibits predictable behaviour that supports downstream control mechanisms such as pacing and capacity management~\cite{Moor_KDD_2025}.
These properties are especially important for the practical deployment of complex transformer-based RS in production environments. 
 We evaluate our approach through extensive offline experiments across multiple large-scale RS datasets and a large-scale online experiment, demonstrating consistent Pareto improvements over existing baselines.

 
Our key contributions are as follows:

\begin{itemize}
     \item \textbf{Inference-Time Decoding Method:} We formulate multiobjective recommendation in the generative setting from an online constrained optimisation perspective and derive a lightweight decoding strategy that augments the final scoring stage of autoregressive RS.
    \item \textbf{Theoretical Guarantees:} We provide a theoretical analysis characterising constraint satisfaction and regret as a function of algorithmic hyperparameters. 
    \item \textbf{Empirical Evaluation:} We conduct extensive offline and online experiments across multiple tasks and architectures, demonstrating consistent improvements in multiobjective trade-offs without degrading user experience.
\end{itemize}

The remainder of the paper is organised as follows: Section~\ref{related_work} reviews related work. 
Section~\ref{sec:formal_model} formalises the problem and introduces the online optimisation framework. 
Section~\ref{sec:primal_dual} presents the stochastic primal-dual decoder and its theoretical analysis.
Section~\ref{sec:evaluation} reports experimental results, and  Section~\ref{sec:conclusion} concludesr.

\section{Related Work}\label{related_work}

We review prior work on generative RS and multiobjective recommendation, highlighting connections to decoding-based approaches.

\paragraph{Generative Modelling in RS}
Kang et. al.~\cite{sasrec_paper} proposed a causal transformer-based model optimised on sequential interaction data for next-item prediction, demonstrating significant gains in positive user interactions compared to non-generative models.
This modelling paradigm was later adopted for next-action prediction in~\cite{transact, xia2025transactv2lifelonguser}, where online improvements were reported in large-scale deployments.

Several follow-up works~\cite{FISSA, Liwei_2020, Zhai_2024, PetrovMacdonald2023GPTRec} extended these ideas to better capture global and local user preferences as well as long interaction sequences, showing promising online gains when applying autoregressive (AR) generation in large-scale RS~\cite{Zhai_2024}.
More recently, Geng et al.~\cite{P52022} demonstrated that fine-tuned language models (LLMs) can also be applied for recommendation tasks.
Similarly to  the approaches above, these models rely on causal transformer architectures, enriched with natural language representations (see also~\cite{Sanner_2023, mao-etal-2023-unitrec, wang2024rethinkinglargelanguagemodel}).
In this work, we design a decoding layer that is compatible with all of the above generative architectures.  

Recently, Tomasi et. al.~\cite{Tomasi_diffusion} explored diffusion models for prompt-conditioned slate generation.
While these models show promise, they currently underperform AR approaches in RS settings~\cite{Sahoo_discrete_diffusion}; we therefore focus on AR models.
For a comprehensive survey of generative modelling in RS, we refer the reader to~\cite{Deldjoo_2024}. 

\paragraph{Multiobjective Recommendations}
A large body of work focuses on balancing user satisfaction with additional objectives such as fairness, diversity, or creator-level constraints. 
Many approaches adopt simple scalarisation strategies, combining multiple objectives into a single target via a weighted average  \cite{Zhang_2014, Mehrotra2018}.

Beyond scalarisation, several works have explored sequential decision-making formulations for multiobjective optimisation. Busa-Fekete et al.~\cite{MO_bandits_2017} proposed a bandit-based framework for multiobjective optimisation, later extended to contextual recommendation settings in~\cite{Mehrotra2020}. 
While these methods provide principled mechanisms for learning trade-offs between objectives, they are typically developed for non-generative ranking settings and do not explicitly account for sequential dependencies in slate generation.

More recently, sequence-model-based approaches have incorporated multiobjective control directly into generative models. 
The Decision Transformer architecture~\cite{Decision_transformer} conditions generation on desired outcome signals, and Gao et al.~\cite{gao2025futureconditionedrecommendationsmultiobjectivecontrollable} extend this idea to a multiobjective setting by conditioning on multiple target objectives. 
While effective, such approaches integrate objective control at training time, requiring retraining or fine-tuning when item attributes change, which is costly in large-scale RS deployments.
In contrast, our work focuses on decoder-level techniques that enable multiobjective control at inference time, allowing objectives to be adjusted dynamically without modifying model parameters.

\paragraph{Constraint Optimisation \& Constrained Decoding}
Prior work on constrained decoding has explored optimisation-based and search-based methods for controlling sequence generation. 
Wu et al.~\cite{constrained_decoding_sigir_2025} investigate corpus-specific constraints and beam search design for large language models, while related beam search–based heuristics are studied in~\cite{Hokamp2017LexicallyCD, info12090355}.
In contrast to these heuristic approaches, we focus on deriving a principled approximation algorithm grounded in online constrained optimisation.


\begin{figure}[t]
\includegraphics[width=.85\linewidth]{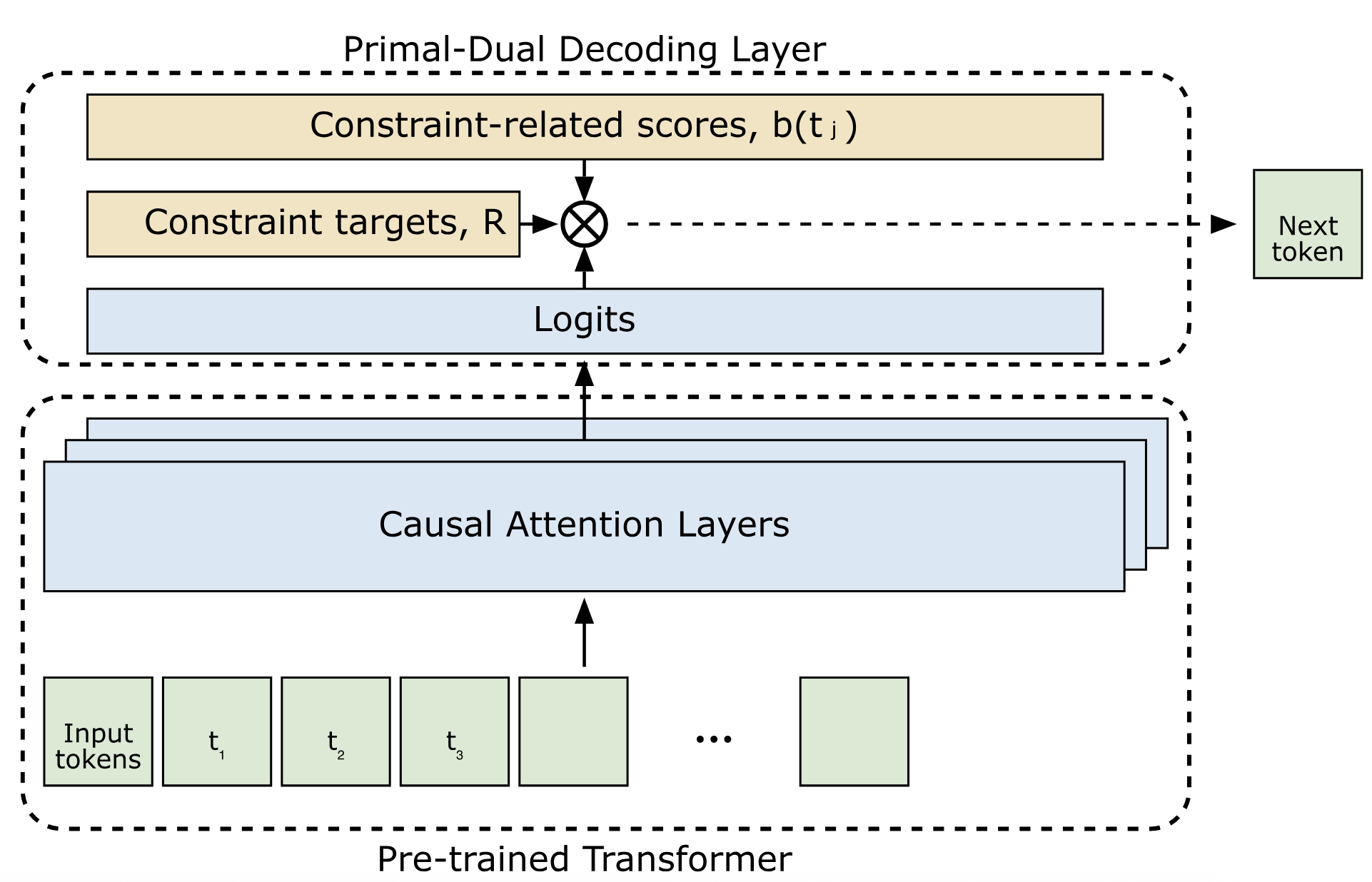}
\vspace{-5pt}
\caption{\small{Stochastic Primal-Dual Decoder: The decoding layer relies on the scores output by the pre-trained transformer $p_{\theta}$ and the constraint-related scores to steer generation into the desired direction.}}
\label{fig:structure}
\end{figure}

\section{Generative Recommender System as Online Optimisation}\label{sec:formal_model}

We begin by introducing the formal model and framing generative slate construction from an online constraint optimisation view. 

Let $T=\{t_1, ..., t_M\}$ denote the set of  \textit{items} (or ``tokens") and let $x \in T^*$ denote a sequence of items representing the \textit{prompt}. 
We consider a parametric generative model $p_{\theta}: T^* \rightarrow \Delta(T)$, 
which maps a prompt $x$ to a probability distribution over items, where $\theta$ denotes the model parameters.

In RS applications, items correspond to entities such as music tracks or movies, while prompt $x$ represents a user's \textit{history} of interactions. 
We assume $p_{\theta}$ is trained on sequences of past user-item interactions~\cite{sasrec_paper, transact, xia2025transactv2lifelonguser}, and thus outputs a distribution over the next item a user is likely to interact with, conditioned on history $x$. 

We focus on the problem of constructing personalised recommendation \textit{slates} of fixed length $K$ (e.g., playlists of $K$ tracks or shelves of $K$ movies)~\cite{Boutilier_2010}. 
To generate such a slate, the model $p_{\theta}$ selects items \emph{autoregressively}: At each step, an item is sampled or selected from the model’s output distribution and appended to the prompt, which is then used to generate the next item~\cite{sasrec_paper, transact}.


Throughout this work, we adopt the common assumption that the scores produced by $p_{\theta}$ reflect the relevance of items for the user. Accordingly, we denote by $p_{\theta}(t_j | x, t_{1:k})$ the \textit{relevance score} assigned by the model to item $t_j$ given the original prompt $x$ and the sequence of previously selected items $t_{1:k}=(t_1, ..., t_{k})$. 

\subsection{Online Allocation with a Single Objective} 

We begin by considering the single objective setting, which highlights the challenges introduced by autoregressive generation and motivates an online optimisation view of slate construction.

In contrast to the non-generative RS settings~\cite{Mehrotra2018, Mehrotra2020, Mehrotra2021}, where item relevance scores are typically assumed to be independent, AR generative models produce relevance scores that depend on previously selected items. 
In particular, at decoding step $k$, the relevance score $p_{\theta}(t_j | x, t_{1:k-1})$ assigned to an item $t_j$ may differ from the score $p_{\theta}(t_j | x, t_{1:k-2})$ assigned at the previous step. 
As a result, offline optimisation strategies used in non-generative settings~\cite{Mehrotra2018, Mehrotra2021} become inefficient when applied to generative slate construction.\footnote{ 
Intuitively, this inefficiency arises because relevance scores change as items are selected, making precomputed rankings suboptimal~\cite{Moor_CIKM23}.} 



To address this, we model slate construction from an \textit{online} optimisation perspective. 
We view the generative model $p_{\theta}$ as defining an optimisation \textit{environment} that evolves as items are selected (see Figure~\ref{fig:structure}).
At each step $k=1,...,K$, the environment reveals scores $p_{\theta}(t_j | x, t_{1:k-1})$ for all candidate items $t_j, j=1,...,M$. 
Based on these scores, the algorithm selects one previously unallocated item to occupy position $k$, while keeping  past allocations fixed.
Formally, the resulting online allocation problem can be written as:
\begin{align}\label{eq:single_objective}
    \max_{a_{kj}\in\{0,1\}} &\sum_{k=1}^K \sum_{j=1}^M p_{\theta}(t_j | x, t_{1:k-1})a_{kj}\\
    &\sum_{k=1}^K a_{kj} \leq 1\;\;\forall j=1,...,M\label{eq:feasibility_constr_0}\\
    &\sum_{j=1}^M a_{kj} = 1\;\;\forall k=1,...,K\label{eq:feasibility_constr_01}.
\end{align} 
Here, $a_{kj}=1$ indicates that $t_j$ is assigned position $k$, and $a_{kj}=0$ otherwise. 
Objective (1) maximises the total expected relevance, Constraint~(\ref{eq:feasibility_constr_0}) and (\ref{eq:feasibility_constr_01}) ensure that each item is selected at most once and that exactly one item is assigned to each position respectively.

Since the problem is solved online, at any step $k$ only the relevance scores revealed up to that point are available, and only the variables $a_{k,1:M}$  can be set. 
In practice, solving (\ref{eq:single_objective})-(\ref{eq:feasibility_constr_01}) optimally is computationally intractable~\cite{Moor_CIKM23}, and  heuristic approaches such as greedy decoding are typically used to get approximate solutions.


\subsection{Multiobjective Setting}\label{multi-obj-setting}
We now extend the single objective formulation introduced above to a multiobjective setting. In addition to maximising user relevance, we also consider an auxiliary objective defined over item attributes, for example requiring that a minimum number of items belonging to a given category or satisfying a specific attribute constraint appear in the slate. For clarity of exposition and without loss of generality, we focus on the case of a single additional constraint.

We associate each item $t_j$ a non-negative value $b(t_j)$, which quantifies its contribution to the auxiliary objective (e.g., a binary novelty score or a continuous contribution weight). We further define a \textit{target} $R\geq 0$ representing the minimum level that this objective must achieve over the entire slate. 
The goal is to construct a slate that maximises total relevance while ensuring that the cumulative contribution of the selected items meets this target. 
%
Formally, the multi-objective allocation problem can be written:

\begin{align}
     \max_{a_{kj}\in\{0,1\}} &\sum_{k=1}^K \sum_{j=1}^M p_{\theta}(t_j | x, t_{1:k-1}) a_{kj}\label{eq:max_usat_online_1}\\
     \text{s.t.,} &\sum_{k=1}^K \sum_{j=1}^M b(t_j) a_{kj} \geq R\label{eq:budget_constr}\\
     &\sum_{k=1}^K a_{kj} \leq 1\;\;\forall j=1,...,M\label{eq:feasibility_constr_11}\\
     &\sum_{j=1}^M a_{kj} = 1\;\;\forall k=1,...,K.\label{eq:feasibility_constr_1}
\end{align} 
As in the single-objective case, the relevance scores $p_{\theta}(t_j | x, t_{1:k-1})$ in Objective~(\ref{eq:max_usat_online_1}) are not known in advance, but are revealed sequentially by the generative model as decoding progresses. 
Constraint~(\ref{eq:budget_constr}) enforces that the total contribution of the selected items toward the auxiliary objective reaches the target $R$,  while Constraint~(\ref{eq:feasibility_constr_11}) ensures that each item is selected at most once.

Problem~(\ref{eq:max_usat_online_1})-(\ref{eq:feasibility_constr_1}) defines the desired behaviour of the decoding process when generating recommendation slates under multiple objectives. Importantly, the optimal trade-off between relevance and the auxiliary objective depends on both past allocation decisions and the remaining capacity to satisfy the constraint over future positions in the slate. This observation motivates the need for inference-time control mechanisms that can adapt dynamically during generation. In the next section, we derive such a decoding strategy using a primal-dual approximation framework.

\section{Online Optimisation-Based Inference}\label{sec:primal_dual} 

We now derive an online decoding strategy that  solves the multi-objective allocation problem defined in Section~\ref{multi-obj-setting} approximately. The proposed approach leverages a stochastic primal-dual formulation to guide autoregressive generation toward constraint-satisfying recommendation slates. We first present the decoding algorithm, and then analyse its theoretical properties.

\subsection{Primal-Dual Decoding}


We begin by providing intuition for the proposed decoding strategy. During autoregressive generation, items are selected sequentially, and the contribution toward an auxiliary objective (e.g., satisfying attribute-based or diversity-related constraints) accumulates over time. Early in the generation process, the auxiliary objective may require stronger emphasis, whereas later it may already be close to being satisfied. An effective decoder should therefore adaptively adjust the trade-off between relevance and the auxiliary objective based on how much of the target remains to be achieved.

We capture this adaptive trade-off using a primal-dual formulation, where a Lagrangian multiplier dynamically controls the relative importance of the auxiliary objective during decoding. This formulation is particularly well suited to the generative setting, where decisions are made sequentially and future relevance scores are not known in advance.

Under this view, Problem~(\ref{eq:max_usat_online_1})-(\ref{eq:feasibility_constr_1}) can be interpreted as an instance of online constrained optimisation.
During autoregressive generation, the allocation variables $a_{kj}$ are determined sequentially for positions $k=1,...,K$. 
As items are selected one by one, the \textit{slack} of auxiliary Constraint (\ref{eq:budget_constr})--defined as the remaining gap between the accumulated contribution and the target $R$--changes over time. As this slack decreases, satisfying the constraint becomes easier, and the relative importance of prioritising items with high auxiliary contribution diminishes.

This evolving trade-off between relevance and auxiliary contribution can be formalised using a Lagrangian multiplier $\lambda$, which acts as a dynamic weight on the constraint~\cite{Bertsimas1998, MasColell}. Intuitively, larger values of $\lambda$ place greater emphasis on satisfying the auxiliary objective, while smaller values prioritise relevance.


We begin by dualising Constraint (\ref{eq:budget_constr})~\cite{Bertsimas1998}, which transforms the optimisation problem into a form that can be handled incrementally during decoding. 
Problem~(\ref{eq:max_usat_online_1})-(\ref{eq:feasibility_constr_1}) can be rewritten as:
\begin{align}
     \max_{a_{kj}\in\{0,1\}} \Big[ &\sum_{k=1}^K \sum_{j=1}^M p_{\theta}(t_j | x, t_{1:k-1}) a_{kj} + \min_{\lambda\geq 0}  \Big(\lambda \sum_{k=1}^K \sum_{j=1}^M b(t_j) a_{kj} - \lambda R   \Big) \Big]\label{eq:dualised}\\
     \text{s.t.,}\;\;&\sum_{k=1}^K a_{kj} \leq 1\;\;\forall j=1,...,M,\\
     &\sum_{j=1}^M a_{kj} = 1\;\;\forall k=1,...,K.\label{eq:feasibility_constr_2}
\end{align} 
Here, $\lambda \geq 0$ is the Lagrangian multiplier associated with the auxiliary constraint, and we let $\lambda_k$ denote its value at decoding step $k$. 
For a fixed $\lambda_{k}$, maximising the outer objective in (\ref{eq:dualised}) reduces to selecting, at each step $k$, the item that maximises a weighted combination of relevance and auxiliary contribution:\footnote{We provide the respective derivations in Appendix \ref{app:proofs}}
\begin{align}\label{eq:outer_problem}
    j^* \in \arg\max_{j} \Big\{p_{\theta}(t_j | x, t_{1:k-1}) + \lambda_{k} b(t_j)  \Big\}.
\end{align}
Once an item is selected, the realised contribution toward the auxiliary objective becomes known, and the constraint slack changes. 
Consequently, the multiplier $\lambda$ must be updated after each allocation decision to reflect the updated state of constraint satisfaction. 
This update is obtained by solving the inner minimisation problem in (\ref{eq:dualised}):\footnote{Here, we substituted $R=\sum_{k=1}^K \frac{R}{K}$, and used the fact that each position of the slate must be allocated, i.e., $\frac{R}{K}=\sum_{j=1}^M \frac{R}{K} a_{kj}$ for all $k=1,...,K$, see Equation (\ref{eq:feasibility_constr_01}).}
%
%
\begin{align}\label{eq:inner_problem}
    \lambda_{k+1} = \arg\min_{\lambda \geq 0}\Big\{ \lambda \sum_{k=1}^K \sum_{j=1}^M \Big( b(t_j) - \frac{R}{K}\Big) a_{kj} \Big\}.
\end{align}
The term $R/K$ represents the per-position share of the target, allowing the update to compare the realised contribution at each step against the average contribution required to meet the constraint over the full slate.

Since this is a continuous optimisation problem and updates must be performed online, we update $\lambda$ using \textit{mirror descent}~\cite{Nemirovsky_1985, Balseiro2020DualMD}. Mirror descent is well suited to this setting as it enforces non-negativity and provides stable, multiplicative updates commonly used in online allocation and pacing problems.
Concretely, we solve:
\begin{align}\label{eq:inner_problem_1}
    \lambda_{k+1} = \arg\min_{\lambda \geq 0}\Big\{ \lambda \sum_{j=1}^M \Big(b(t_j)-\frac{R}{K} \Big) a_{kj}  + \frac{1}{\eta} D_{h}(\lambda || \lambda_k) \Big\},
\end{align}
where $D_h(\lambda || \lambda_k)$ denotes the Bregman divergence induced by the negative entropy function $h(\lambda) = \lambda \ln \lambda - \lambda$, see~\cite{Bubeck_2015, Balseiro2020DualMD}, and $\eta>0$.
This regularisation discourages large, unstable changes in $\lambda$ across consecutive decoding steps.

Now, solving (\ref{eq:inner_problem_1}) yields the closed-form update:
\begin{align}\label{eq: multiplier_update}
    \lambda_{k+1} = \lambda_0 \exp\Big\{ -\eta \sum_{k=1}^K \sum_{j=1}^M \Big(b(t_{j}) - R/K\Big)a_{kj} \Big\},
\end{align}
where $\eta>0$ is a step-size parameter. The exponential form of this update increases $\lambda$ rapidly when the constraint is violated and decreases it smoothly once the constraint is on track, providing stable yet responsive control during generation. A detailed derivation is provided in Appendix~\ref{app:proofs}.


Algorithm \ref{alg:online_opt} summarises our resulting primal-dual decoding procedure. 
The algorithm initialises all allocation variables to zero and sets the initial Lagrangian multiplier to $\lambda_0=1$ (line 1). 
It then generates the recommendation slate autoregressively over $K$ steps (lines 2-7).
At each step $k$, the generative model scores all unallocated items using:
\begin{align}
    \sigma(t_j) = p_{\theta}(t_j | x, t_{1:k-1}) + \lambda_k b(t_j).
\end{align}
and selects the item with the highest score (lines 3-4).
The selected item is added to the slate, and the multiplier $\lambda$ is updated using the rule in (\ref{eq: multiplier_update}), reflecting the updated constraint slack. The process terminates once all $K$ positions have been filled. 

\begin{algorithm}[t]\small
\caption{Online Primal-Dual Allocation Algorithm}\label{alg:online_opt}

\KwData{$\eta, \text{target}\; R,\;\text{bids}\; b(t_j), j=1,...,M$}
Set $\lambda_0 \leftarrow 1$ and $a_{kj} \gets 0$ for all $k=1,...,K,\; j=1,...,M$; $S\gets \emptyset$

\For{$k=1,...K$}
{
  {    
    $\sigma(t_j) \gets p_{\theta}(t_j|x,t_{1:k-1}) + \lambda_{k-1} b(t_j)$ for all $j\in \{1,...,M\} \textbackslash S$
        
    $j^*\gets \arg\max_j \{\sigma_j\}$

    $S \gets S \cup \{j^*\}$

    $a_{ij^*}^k\gets 1;\;\;\; a_{ij}^k\gets 0 \;\;\forall j \neq j^*$

    $\lambda_{k} \gets \lambda_0\exp\{-\eta (\sum_{t=1}^k\sum_{j=1}^M (b(t_j)-R/K) a_{tj}\}$
  }
}
\Return $a_{kj}$, $\forall k,j$
\end{algorithm}
 

\subsection{Theoretical Analysis}\label{sec:theory}

Having introduced the primal-dual decoding strategy in Algorithm~\ref{alg:online_opt}, we now analyse its theoretical properties. 
In particular, we derive bounds on (i) the violation of auxiliary constraint and (ii) the regret with respect to the optimal allocation under Problem~(\ref{eq:max_usat_online_1})-(\ref{eq:feasibility_constr_1}). 
These results provide robustness guarantees for the proposed method and clarify how the step-size hyperparameter $\eta$ in Algorithm~\ref{alg:online_opt} controls the trade-off between relevance maximisation and constraint satisfaction.

A key challenge in analysing Algorithm~\ref{alg:online_opt} arises from the autoregressive nature of generative RS. 
The relevance coefficients $p_{\theta}(t_j | x, t_{1:k-1})$ in Objective (\ref{eq:max_usat_online_1}) are not independent across decoding steps: Selecting an item at step $k$ affects not only the immediate reward, but also the relevance scores observed at future steps. 
This \textit{stochastic} temporal dependence complicates the direct application of standard primal-dual analysis techniques developed for settings with independent or stationary rewards.


To make the analysis tractable, we introduce a mild assumption of the \textit{temporal consistency} of relevance scores:
\begin{assumption}\label{as:temp_cons}
    For any item $j=1,...,M$, iteration $k=1,...,K$ and sequence $t_{1:k-1}$ of previously selected items, we let:
    \begin{align}\label{eq:temp_consistency}
    p_{\theta}(t_j | x, t_{1:k-1}) = p_{\theta}(t_j | x, t_{1:k-2}) + \epsilon_{k-1,j}(x),
\end{align}
where $\epsilon_{k-1,j}(x)\sim \mathcal{N}(0,\sigma_x)$ captures the effect of the previously selected item on the relevance of $t_j$, and the base relevance $p_{\theta}(t_j |x)$ are independent across items $t_j$.
\end{assumption}
Intuitively, this assumption allows relevance to be analysed as consisting of a stable, item-specific component that evolves gradually as items are selected. It reflects the idea that, in well-calibrated generative models, relevance scores may change during autoregressive generation but do so in a controlled manner rather than arbitrarily. Similar forms of temporal consistency assumptions are commonly used in the analysis of sequential decision-making and temporally dependent models (see, for example, ~\cite{maystre2025incrementalsequenceclassificationtemporal}).


Under Assumption~\ref{as:temp_cons}, we can now establish the following guarantee on constraint satisfaction:

\begin{restatable}{statement}{OnlineAlg}\label{th:cv}
    If Problem~(\ref{eq:max_usat_online_1})-(\ref{eq:feasibility_constr_1}) is feasible, then for any $\eta>0$, the constraint violation incurred by Algorithm~\ref{alg:online_opt} is bounded by \[O\Big(\frac{1}{\eta}\log\frac{1}{b_{max}-\frac{R}{K}}\Big).\]
\end{restatable}
\begin{proof}
We provide the proof in Appendix~\ref{app:cv}. 
\end{proof}

Statement~\ref{th:cv} shows that increasing $\eta$ tightens auxiliary constraint (\ref{eq:budget_constr}), leading to  smaller violations.
This behaviour is intuitive: As seen in line 7 of Algorithm~\ref{alg:online_opt}, a larger $\eta$ makes the dual variable $\lambda_k$ more responsive to constraint violations.
As a result, items with higher auxiliary contribution are prioritised more aggressively, 
pushing the allocation  towards meeting the target $R$.

However, enforcing the constraint more strictly comes at a cost. Increasing $\eta$ also increases the emphasis placed on the auxiliary objective, potentially reducing relevance. This trade-off is captured by the following regret bound.


\begin{restatable}{statement}{algRegret}\label{th:alg_regret}
    Regret of Algorithm \ref{alg:online_opt} is upper bounded by
    \begin{align}
        Regret(\mathcal{A}) \leq O\Big(\frac{1}{\eta} + \eta K + \frac{4}{3}\sigma_x K^{3/2} \sqrt{2\log\frac{2MK}{\delta}}\Big)
    \end{align}
    with probability at least $1-\delta$. 
\end{restatable}
\begin{proof}
We provide the proof in Appendix \ref{app:regret}.
\end{proof}

Statement~\ref{th:alg_regret} confirms that larger values of $\eta$ weakens the regret bound, reflecting the increased prioritisation of constraint satisfaction over relevance. Together, Statements~\ref{th:cv} and~\ref{th:alg_regret} characterise a clear trade-off controlled by the step-size parameter $\eta$, which can be tuned in practice to balance relevance and auxiliary objectives.

Finally, we note that Algorithm~\ref{alg:online_opt} has  linear complexity in both the slate length $K$ and the number of candidate items $M$. This makes the proposed decoding strategy well suited for low-latency, large-scale recommender system deployments.


\section{Evaluation}\label{sec:evaluation}
We evaluate our approach through a combination of offline and online experiments. 
Section~\ref{sec:offline_evaluation} describes the offline evaluation setup and present the results from large-scale experiments across multiple datasets and generative architectures. Section~\ref{sec:online_evaluation} reports the outcomes of a large-scale online A/B experiment conducted in a real-world RS. In Section~\ref{sec:hyperparam}, we analyse the impact of hyperparameter choices on the trade-off between relevance and constraint satisfaction. Finally, Section~\ref{sec:discussion} summarises the main findings.

\subsection{Offline Evaluation}\label{sec:offline_evaluation}

We begin by describing our offline evaluation setup and summarising the results of our offline study.
We first introduce the recommendation domains and the corresponding datasets used in our experiments, followed by the transformer-based architectures employed in each domain. We next  outline the decoding strategies used as baselines. 
All decoding methods are applied on top of the same underlying generative models,  enabling consistent comparison across domains and the computation of final offline metrics. 

\begin{figure*}[h]
\includegraphics[width=.85\linewidth]{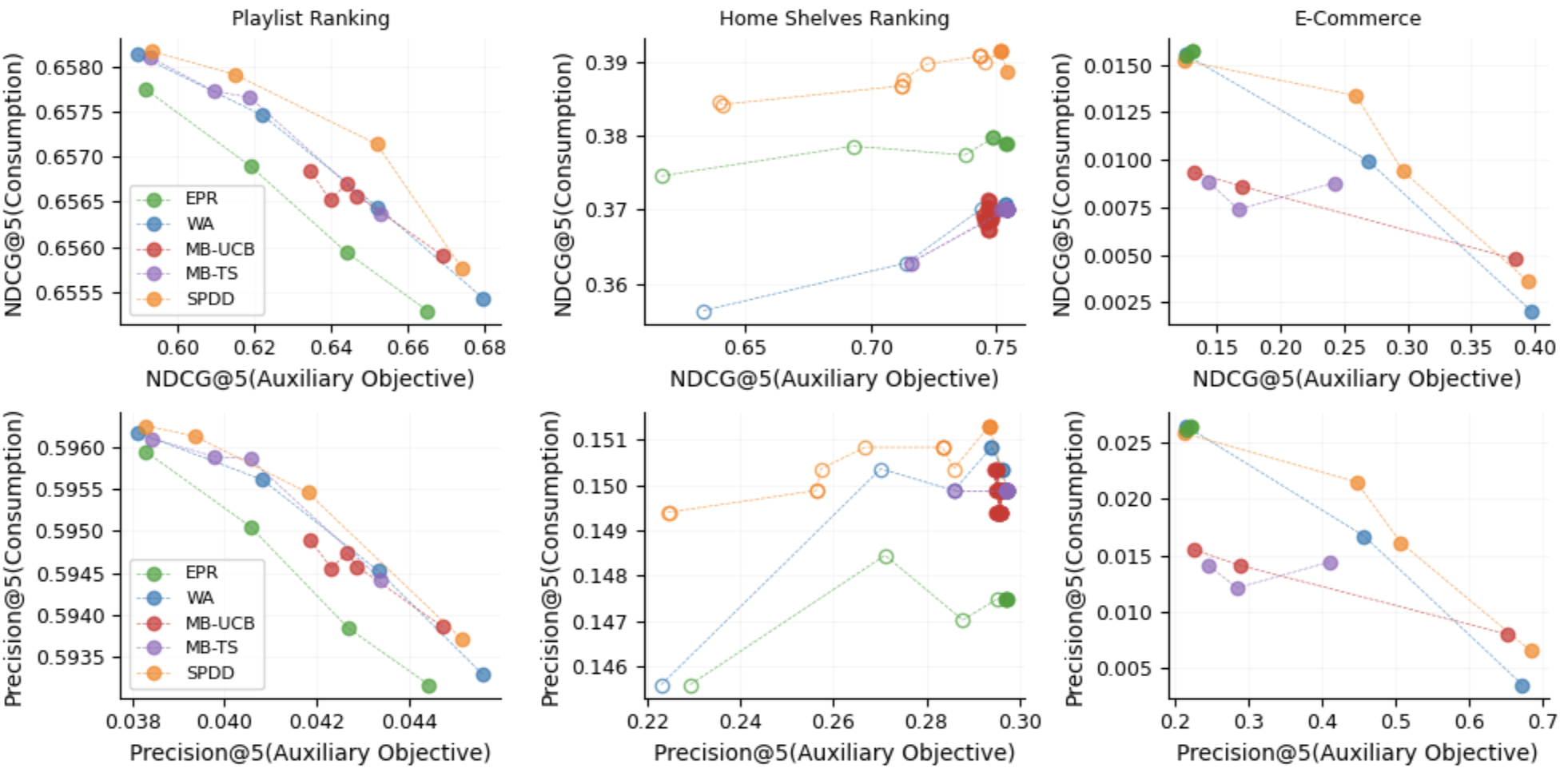}
\caption{Pareto frontiers for consumption vs. auxiliary objective stream share for the three different domains.}\label{fig:pareto}
\end{figure*}

\paragraph{Datasets} We evaluate our approach using two large-scale datasets from a major digital content provider, and one publicly available e-Commerce dataset.  
Across all datasets, we consider an auxiliary objective that requires achieving a target level of allocation for a designated subset of items, corresponding to Constraint (\ref{eq:budget_constr}).\footnote{For sensitivity reasons, we abstract the auxiliary objective in our experiments. In practice, this objective can represent a range of item-level constraints, such as category-level allocation constraints, novelty, or diversity requirements. Owing to the generality of the formulation in (\ref{eq:max_usat_online_1})-(\ref{eq:feasibility_constr_1}), this abstraction does not affect the validity of our evaluation.} 
\begin{itemize}[leftmargin=0.4cm]
    \item \textit{Playlist Ranking}. The dataset contains approximately 40B interactions from 62M users across more than 42M music tracks collected over a two-week period on a large streaming platform. 
   It includes metadata identifying a subset of tracks associated with the auxiliary objective (i.e., coefficients $b(t_j)$ in Constraint (\ref{eq:budget_constr})).
    We use this dataset to generate playlists that maximise user consumption while meeting a target stream share for the designated subset of tracks, as formalised in Problem~(\ref{eq:max_usat_online_1})-(\ref{eq:feasibility_constr_1}). Additional details are provided in Appendix~\ref{app:playlist_dataset}.
    
    \item \textit{Home Shelves Ranking}. The Home page is the primary entry point for user interaction with the platform. It consists of a \textit{cards} (which may refer to entities such as playlists, artists, podcasts, audiobooks) arranged in thematically-coherent \textit{shelves}.
    We obtained a Home training dataset comprising 5.3B shelf impressions from over 500M distinct user requests collected over one week, along with metadata identifying shelves associated with the auxiliary objective. 
    As in the playlist setting, we evaluate models that rank shelves to maximise user engagement subject to an impression target on this subset (see Appendix~\ref{app:home_shelves}). 
    
    \item \textit{E-Commerce Recommendations}. We additionally use \textit{Sports and Outdoors} subset of the publicly available Amazon e-Commerce dataset~\cite{amazon_data}, containing over 5M user–item interactions across more than 1M items. 
    We selected two item sub-categories of items--\textit{`Camping \& Hiking'} and \textit{`Cycling'}--to define the auxiliary objective.
    Further details can be found in Appendix~\ref{app:amazon}.    
\end{itemize}

\paragraph{Generative Models} To demonstrate the generality of the proposed inference-time decoding strategy, we evaluate it across a diverse set of widely used generative RS models. In all cases, the underlying model is trained in a single-objective setting to optimise relevance, and the proposed primal-dual decoding layer is applied only at inference time to enable multiobjective control: 
\begin{itemize}[leftmargin=0.4cm] 
    \item \textit{SASRec} \cite{sasrec_paper} is a causal transformer architecture optimised on sequential user-item interaction data for next-item prediction. We use this model for the \emph{Playlist Ranking} task.
    
    \item \textit{TransAct} \cite{transact} is a transformer-based model designed for real-time user action prediction and has demonstrated strong performance in large-scale production systems. 
    We apply this model to the \emph{Home Shelves Ranking} task.
    
    \item \textit{LM-Rec} \cite{P5_2022} is a pre-trained LLaMA-based language model fine-tuned on the e-commerce dataset for next-item recommendation in the spirit of \cite{P5_2022}. We use this model to evaluate our approach in an open-domain setting that relies on semantic item representations rather than conventional item identifiers.
\end{itemize}


For each model and domain, we first generate slates using the base model optimised for relevance alone. 
We then apply different decoding strategies, including the proposed primal-dual decoder, on top of the same model outputs to incorporate auxiliary objectives at inference time. 
This design isolates the effect of the decoding layer and enables a fair comparison across models and tasks. 

\paragraph{Baselines} 
We compare the offline performance of the proposed {\bf Stochastic Primal–Dual Decoder} ({\bf SPDD} for short) (Algorithm~\ref{alg:online_opt}) against several decoding-based approaches that introduce multiobjective control at different stages of the generation process.
\begin{itemize} [leftmargin=0.4cm]
    \item \textit{Ex-post Re-ranking (EPR)}. The baseline follows the approach of~\cite{Mehrotra2018} and applies multiobjective optimisation \emph{after} slate generation. 
    Item-level relevance scores $p_{\theta}(t_j | x, t_{1:k-1})$ are combined with auxiliary scores $b(t_j)$ using a simple weighted average once all relevance scores for the slate have been computed. Items are then ranked according to the combined score, ignoring the sequential nature of generative decoding.
    
    \item \textit{Online Weighted Average (WA)}. This baseline applies a similar weighted average heuristic as EPR, but does so \emph{during} autoregressive generation. At each decoding step, relevance and auxiliary scores are combined using fixed weights to select the next item. While this enables online control,  the weights remain fixed and do not adapt based on constraint satisfaction. 
    
    \item \textit{Multiobjective Bandits (MB-UCB)}. This baseline combines relevance and auxiliary scores using the Generalised Gini Index~\cite{MO_bandits_2017, Mehrotra2020}. The weights in the GGI formulation are treated as arms in a bandit problem and are learned online using an Upper Confidence Bound (UCB) strategy to balance multiple objectives.
    

    \item \textit{Multiobjective Thompson Sampling (MB-TS)}. Similar to MB-UCB, this baseline uses Thompson sampling instead of UCB to learn the optimal set of objective weights within the GGI framework.
    
    
    \item \textit{Stochastic Primal-Dual Decoder (SPDD)}. Our proposed decoding strategy, defined in Algorithm~\ref{alg:online_opt}, dynamically balances relevance and auxiliary objectives during autoregressive generation using a stochastic primal-dual update based on constraint slack.
\end{itemize}

We do not include methods that incorporate auxiliary objectives directly into the pre-training or fine-tuning of the generative model (e.g.,~\cite{gao2025futureconditionedrecommendationsmultiobjectivecontrollable}), as such approaches require repeated retraining of large-scale models when objective specifications or item attributes change. Our focus is therefore on decoder-level techniques that enable flexible, inference-time multiobjective control. 


\paragraph{Results}
Figure \ref{fig:pareto} summarises the results of our offline evaluation.
To characterise the trade-offs between user consumption and the stream share of the designated content subset achieved by different decoding strategies, we report NDCG@k (top row) and Precision@k (bottom row).
User consumption is estimated using the total number of interacted items in the generated slates, while the content share is computed as the fraction of interacted items associated with the auxiliary objective.

For each method, we vary the optimisation target (parameter $R$ in Constraint (\ref{eq:budget_constr}) for SPDD, or the scalarisation weight for EPR, WA and MB) and re-rank the logged interaction data accordingly. 
We then recompute NDCG@k and Precision@k separately for user consumption and the auxiliary objective, yielding a single operating point. By repeating this evaluation for different parameter settings, we obtain the Pareto frontiers shown in~Figure~\ref{fig:pareto}.\footnote{In Home Shelves Ranking domain, the auxiliary objective was correlated with the users' consumption, see Figure \ref{fig:pareto} (centre). 
In this case, we also illustrate a few obtained Pareto sub-optimal outcomes with the hollow markers.}


As shown in Figure~\ref{fig:pareto} (left), in the domain with strong sequential dependencies EPR (\textcolor{Green}{green}) is dominated by other multiobjective approaches. This behaviour is expected, as EPR ignores the sequential structure of generative models and applies re-ranking only \textit{after} all relevance scores $p_\theta (t_j | x, t_{1:k-1})$ have been computed.
As a result, EPR performs relatively better only for low constraint targets, where the induced perturbations to the ranking are minimal.


EPR performs particularly poorly for e-commerce (Figure~\ref{fig:pareto}, right), where it fails to achieve high levels of auxiliary-objective satisfaction. In this setting, the designated content subset constitutes a small fraction of available items (see Appendix \ref{app:amazon}), and models optimised solely for relevance rarely select such items. Consequently, \textit{ex-post} re-ranking has limited capacity to improve the corresponding content share.

In contrast, the online weighted-average baseline WA (\textcolor{blue}{blue}) yields a nearly uniform Pareto improvement over EPR across domains, dominating it both in terms of Precision and in terms of NDCG for Playlists/e-Commerce domains. 
However, when compared to the bandit-based approaches MB-UCB (\textcolor{red}{red}) and MB-TS (\textcolor{Purple}{purple}), we observe no consistent additional gains. 
While these methods adapt weights online, their exploration overhead limits practical improvements over fixed-weight strategies.



Finally, the proposed SPDD (\textcolor{orange}{orange}) achieves a substantial and a nearly uniform improvement over all baselines.
For almost any constraint target (x-axis), SPDD attains higher user consumption while satisfying the corresponding auxiliary objective. These improvements are consistent across all three evaluation domains.

\paragraph{Per-Position Analysis}
To better understand the source of the efficiency gains achieved by SPDD, Figure~\ref{fig:positions} presents a per-position analysis of the distribution of items associated with the auxiliary objective and the corresponding interaction volume across the top $K$ positions of the generated slates.

As shown in Figure~\ref{fig:positions} (left), SPDD allocates a slightly larger number of such items at the top position of the slate compared to other methods. Importantly, this does not lead to a noticeable change in user consumption at that position (Figure~\ref{fig:positions}, right). 
At subsequent positions, however, SPDD reduces the share of these items more aggressively, which is associated with increased consumption further down the slate. 
This behaviour reflects the adaptive nature of the proposed decoder, which prioritises satisfying the auxiliary objective early and then shifts focus back to maximising user consumption once the target $R$ has been met. 

\begin{figure}[t]
\includegraphics[width=.99\linewidth]{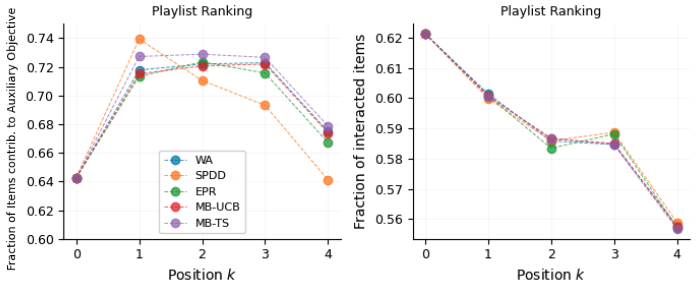}
\caption{\small{SPDD allocates more items with higher contribution towards the auxiliary objective (left) at position 1 without affecting consumption much (right). It then drops the auxiliary objective share while improving consumption.}}\label{fig:positions}
\end{figure}

\begin{figure}[t]
\includegraphics[width=.99\linewidth]{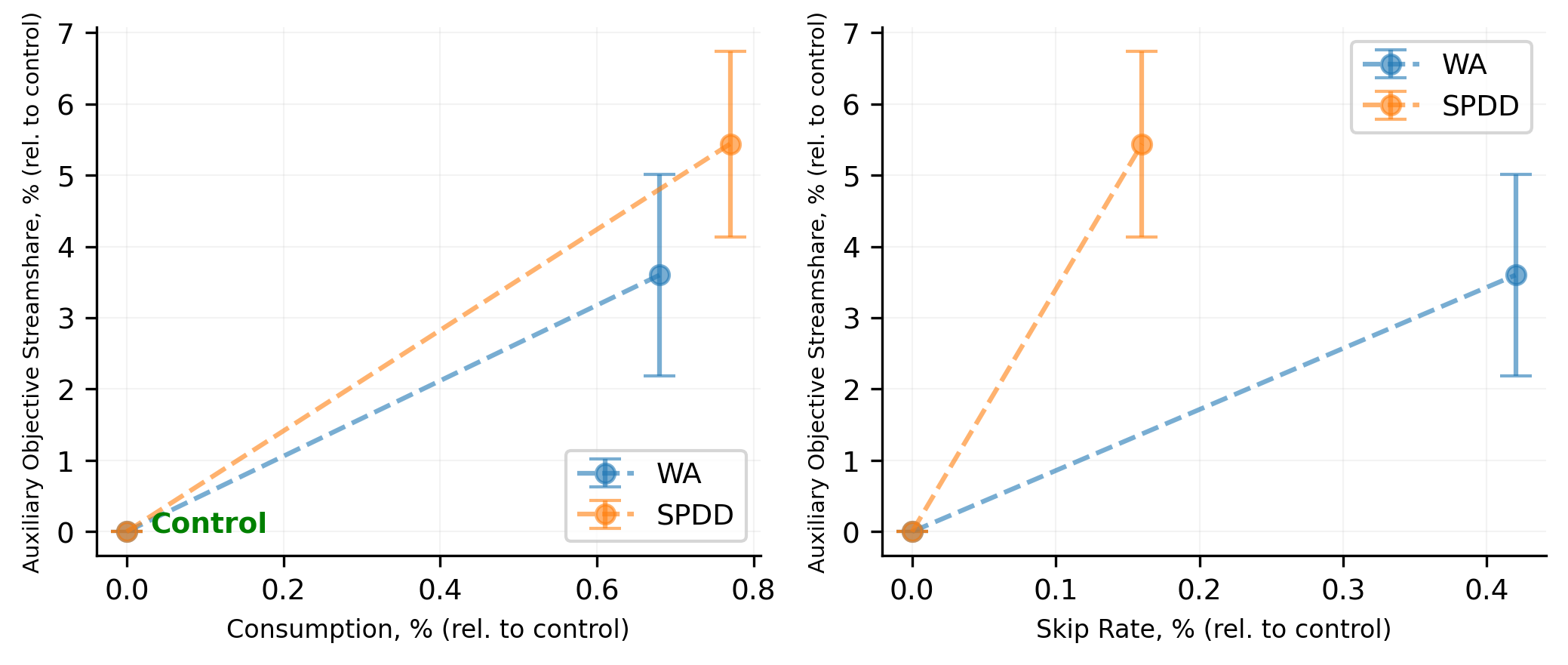}
\vspace{-10pt}
\caption{\small{Online evaluation: SPDD delivers a larger stream share for the auxiliary objective compared to WA while improving consumption (left) and decreasing skip rate (right).}}\label{fig:online_eval}
\end{figure}


\subsection{Online A/B Experiment}\label{sec:online_evaluation}

To evaluate the performance of our approach in a real-world setting, we conducted a large-scale online A/B experiment involving approximately 1M users across more than 150 countries. The experiment re-ranked 600K personalised playlists, with the goal of maximising user consumption (Objective~\ref{eq:max_usat_online_1}) while satisfying a target on the stream share of a designated content subset (Constraint~\ref{eq:budget_constr}).


Users were randomly assigned to one of three groups: (1) a {\bf Control} group served with an autoregressive SASRec model (Section~\ref{sec:offline_evaluation}) optimised solely for consumption;  (2) {\bf Treatment WA}, which used the same autoregressive SASRec model with weighted-average WA decoding strategy (Section~\ref{sec:offline_evaluation}); and (3) {\bf Treatment SPDD}, which combined the same base model with the proposed stochastic primal–dual decoder (Algorithm~\ref{alg:online_opt}).
Baselines EPR, MB-UCB and MB-TS were excluded from the online experiment due to their weaker offline performance. All metrics are reported relative to the control group 
(Figure~\ref{fig:online_eval}).


As shown in Figure~\ref{fig:online_eval} (left), SPDD achieves a higher stream share for the designated content subset (+5.44\%) compared to WA (+3.60\%), without degrading user consumption, measured as total minutes played. Moreover, the skip rate under SPDD is lower than that of WA (+0.16\% vs. +0.42\%, Figure~\ref{fig:online_eval}, right).
Together, these results indicate that the proposed primal–dual decoder delivers a Pareto improvement over fixed weighted-average decoding when optimising multiple objectives in the generative setting.

\subsection{Regret vs. Constraint Violation Trade-off}\label{sec:hyperparam}
We illustrate the trade-off between regret reached by Algorithm~\ref{alg:online_opt} and the corresponding level of constraint violation, as characterised in Statements~\ref{th:cv} and~\ref{th:alg_regret}. Beyond validating the theoretical analysis, this experiment provides practical guidance on selecting the step-size hyperparameter $\eta$ of Algorithm~\ref{alg:online_opt}.

\begin{figure}[t]
\includegraphics[width=.6\linewidth]{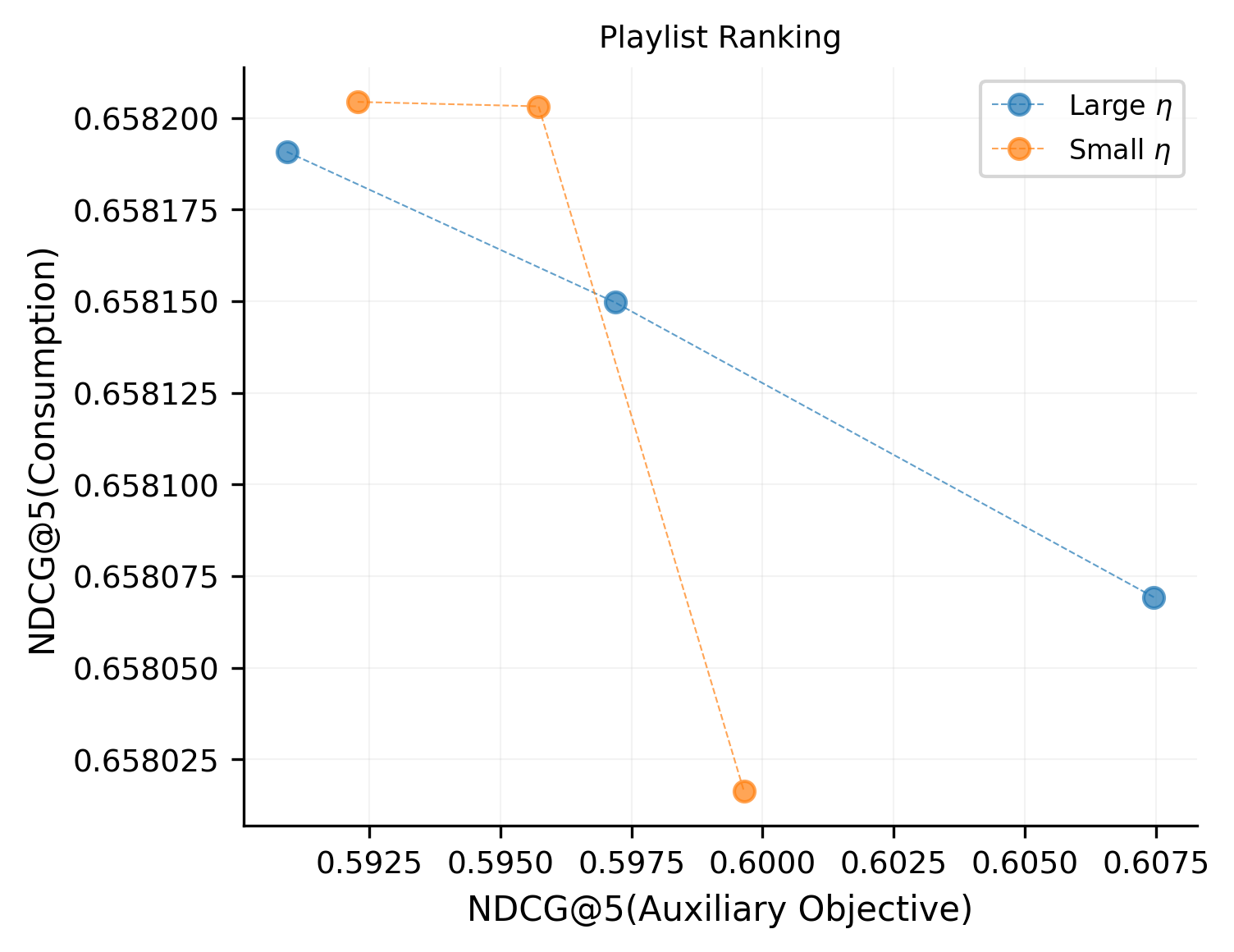}
\caption{\small{Regret vs. Constraint Violation Trade-off: Using small $\eta$ for low constraint targets (x-axis) helps to increase consumption. If the constraint target is large, it may be hard to reach it with small $\eta$.}}\label{fig:reget}
\end{figure}

Figure \ref{fig:reget} depicts this trade-off for the \textit{Playlist Ranking} domain by comparing two Pareto frontiers obtained using different values of $\eta$. 
The \textcolor{orange}{orange} frontier corresponds to a smaller step size, while the \textcolor{blue}{blue} corresponds to a larger step size.
For lower constraint targets (x-axis) the orange frontier dominates, achieving higher user consumption.
This behaviour is consistent with the analysis in Section~\ref{sec:theory}: When the constraint is relatively easy to satisfy, larger values of $\eta$ unnecessarily prioritise constraint satisfaction and lead to higher regret (Statement \ref{th:alg_regret}).

Conversely, for larger constraint targets, the blue frontier dominates. In this regime, a smaller step size is insufficient to enforce the constraint effectively, resulting in higher violation (Statement~\ref{th:cv}).  These results illustrate the trade-off governed by $\eta$ and highlight the importance of tuning this parameter based on the desired balance between relevance maximisation and constraint satisfaction.

\subsection{Summary}\label{sec:discussion}

To assess the effectiveness of the proposed primal-dual multiobjective decoder, we conducted a series of offline and online experiments. The offline results demonstrate a nearly uniform Pareto improvement over all baselines across multiple recommendation domains. Moreover, these gains are consistent across several widely used generative recommender system architectures, highlighting the generality of the proposed approach.


To better understand the source of these improvements, we analysed the generated slates at a per-position level. This analysis shows that the adaptive nature of the decoder enables it to prioritise satisfying the auxiliary constraint early during slate construction and to refocus on maximising user relevance as soon as the constraint target has been met. As a result, potential trade-offs introduced by enforcing the auxiliary objective are mitigated, even in settings where item relevance evolves throughout the autoregressive generation process.


Finally, to assess practical viability, we conducted a large-scale online A/B experiment. The results confirm that the proposed decoding strategy improves multiobjective trade-offs in a real-world deployment without adversely affecting user engagement.

\section{Conclusions}\label{sec:conclusion}

Generative RS introduce a fundamental tension between sequential relevance optimisation and the need to satisfy additional, slate-level objectives. 
Unlike non-generative ranking settings, AR generation induces strong dependencies between selection decisions, rendering post-hoc re-ranking strategies ineffective and making fixed-weight scalarisation brittle. At the same time, incorporating multiobjective control directly into model training tightly couples objectives to model parameters, limiting flexibility in environments where item attributes or optimisation targets evolve over time.

In this work, we addressed this gap by framing generative slate construction as an online constrained optimisation problem and introducing a near-optimal inference-time decoding mechanism for multiobjective control. Our approach treats decoding as the optimisation step and leverages a stochastic primal-dual approximation to adaptively balance relevance and auxiliary objectives during generation. Crucially, this design enables dynamic control without modifying or retraining the underlying generative model.


We illustrated how the derived theoretical guarantees on constraint satisfaction and regret assist with hyperparameter optimisations and make our approach robust to the non-stationary relevance distributions inherent in AR generation. 
Empirically, extensive offline experiments across multiple domains and model architectures show consistent Pareto improvements over existing decoder-level baselines. These results are further corroborated by a large-scale online experiment, which achieved up to a 1.8\% improvement in the auxiliary objective without degrading user engagement.

Taken together, our findings highlight inference-time, optimisation-driven decoding as a principled and practical solution to multiobjective control in generative RS, addressing a key limitation of existing approaches while being compatible with  large-scale deployments.\\

Looking ahead, the proposed framework naturally extends to several directions. 
First, our approach can be adapted towards using personalised user-level auxiliary targets allowing balancing of long-term user satisfaction with auxiliary constraints in the outer loop of the RS. 
Moreover, it can be extended to handle more complex, non-linear constraints, further broadening the scope of possible applications.



\newpage
\clearpage
\bibliographystyle{ieeetr}
\bibliography{sample-base}
\balance

\newpage
\clearpage
\appendix
\section{Appendix}
\subsection{Proofs}\label{app:proofs}

\subsubsection{Derivations of Equations}
\paragraph{Derivation of Equation (\ref{eq:outer_problem})} 

Observe, that at any iteration $k$ we want to set $M$ optimisation variables $a_{kj}$, where $j=1,...,M$. 
As $\lambda=\lambda_k$ is fixed, we can  simplify Problem (\ref{eq:dualised}) as follows:
\begin{align}
    \max_{a_{kj}\in\{0,1\}}&\Big[ \sum_{k=1}^K \sum_{j=1}^M p_{\theta}(t_j | x, t_{1:k-1}) a_{kj} + \Big(\lambda_k \sum_{k=1}^K \sum_{j=1}^M b(t_j) a_{kj} - \lambda_k R   \Big)\Big] =\nonumber \\
    \max_{a_{kj}\in\{0,1\}}& \sum_{k=1}^K \sum_{j=1}^M \big(p_{\theta}(t_j | x, t_{1:k-1}) + \lambda_k b(t_j)\big) a_{kj},\label{eq:outer_simplified}
\end{align}
where all coefficients $p_{\theta}(t_j | x, t_{1:k-1}) + \lambda_k b(t_j) \geq 0$ and Constraint (\ref{eq:feasibility_constr_2}) holds.
Consequently, setting $a_{kj^*=1}$ for item $t_{j^*}$ with the largest coefficient $p_{\theta}(t_j | x, t_{1:k-1}) + \lambda_k b(t_j)$ maximises Equation (\ref{eq:outer_simplified}).\qed

\vspace{15pt}
\paragraph{Derivation of Equation (\ref{eq: multiplier_update})}

Remember, that the Bregman divergence $D_h(\lambda || \lambda_k)$ is defined as follows:
\begin{align}\label{eq:bregman_divergence}
    D_h(\lambda || \lambda_k) = h(\lambda) - h(\lambda_k) - \frac{\partial h(\lambda_k)}{\partial \lambda} (\lambda - \lambda_k).
\end{align}
From the first order conditions of Problem (\ref{eq:inner_problem_1}) we obtain:
\begin{align}\label{eq:FOC}
    \eta\sum_{j=1}^M \Big( b(t_j) - \frac{R}{K} \Big) a_{kj} + \frac{\partial}{\partial \lambda} D_h(\lambda || \lambda_k) = 0.
\end{align}
Substituting Equation (\ref{eq:bregman_divergence}) into Equation (\ref{eq:FOC}):
\begin{align}\label{eq:eq_17}
    \eta\sum_{j=1}^M \Big( b(t_j) - \frac{R}{K} \Big) a_{kj} + \frac{\partial h(\lambda_{k+1})}{\partial \lambda} - \frac{\partial h(\lambda_k)}{\partial \lambda}=0.
\end{align}
Remember now that we define the Bregman divergence with the specific negative entropy function $h(\lambda) = \lambda \ln \lambda - \lambda$.
This allows us to further simplify Equation (\ref{eq:eq_17}) as follows:
\begin{align}
    \eta\sum_{j=1}^M \Big( b(t_j) - \frac{R}{K} \Big) a_{kj} + \ln \lambda_{k+1} - \ln \lambda_k=0.
\end{align}
Consequently,
\begin{align}
    \ln \lambda_{k+1} = \ln \lambda_k + \ln \exp \Big\{ -\eta\sum_{j=1}^M \Big( b(t_j) - \frac{R}{K} \Big) a_{kj} \Big\},
\end{align}
and, therefore,
\begin{align}\label{eq:eq_one_iter_update}
    \lambda_{k+1} = \lambda_k \exp \Big\{ -\eta\sum_{j=1}^M \Big( b(t_j) - \frac{R}{K} \Big) a_{kj} \Big\}.
\end{align}
Expanding the recursion for $\lambda_k$, we obtain Equation (\ref{eq: multiplier_update}).\qed 

\vspace{15pt}
\subsubsection{Bound on Constraint Violation}\label{app:cv}
We now derive the upper bound on constraint violations reached by Algorithm \ref{alg:online_opt}.
The intuition behind the proof is as follows:
First, observe that we can always choose $\eta$ large enough, so that the relevance term $p_{\theta}(t_j | x, t_{1:k-1})$ in line 3 of Algorithm \ref{alg:online_opt} becomes negligibly small compared to the bid term $\lambda_{k-1} b(t_j)$.
By doing so we can tighten the constraint by uplifting items with the largest bids $b_{max}$.
Consequently, if we compute the minimal number of iterations needed to reach such a large $\eta$, we can lower bound the left hand side of Constraint (\ref{eq:budget_constr}).

\OnlineAlg*
\proof{
First, observe that if the problem is feasible, then it must be that $b_{max}\geq\frac{R}{K}$ as otherwise, target $R$ would have been unachievable.
Now, to prove the current statement, we will derive a lower bound on the sum $\sum_{k=1}^K \sum_{j=1}^M \Big( b(t_j) - \frac{R}{K} \Big) a_{kj}$, see Constraint (\ref{eq:budget_constr}).

We let 
\begin{align}\label{eq:marginal_reward}
    r_k = b(t_{j^*}) - \frac{R}{K}
\end{align} 
be the \textit{marginal reward} contributing to Constraint (\ref{eq:budget_constr}) at iteration $k$ of Algorithm \ref{alg:online_opt} (here, $j^*$ is the index of the allocated item, i.e., $a_{kj^*}=1$).
Clearly, $r_k\geq -\frac{R}{K}$ for all $k=1,...,K$ gives us a trivial lower bound on the per-iteration reward.

We also know that $r_k$ is related to the dual variable $\lambda$ via Equation (\ref{eq: multiplier_update}), i.e., the smaller the dual $\lambda_{k+1}$ the larger was the marginal reward $r_k$ allocated on iteration $k$, see Algorithm \ref{alg:online_opt}.
This allows us to tighten the lower bound on $r_k$ by analysing the change $\lambda_{k+1} - \lambda_k$ at each iteration $k$.

We let $\tilde{k}$ be the last iteration of Algorithm \ref{alg:online_opt} when the constraint violation is smaller or equal to some $\alpha>0$, i.e.,
\begin{align}\label{eq:alpha}
    -\sum_{k=1}^{\Tilde{k}} r_{k} \leq \alpha\; \text{and }\; -\sum_{k=1}^t r_k > \alpha, \;\text{for all } t=\Tilde{k}+1, ..., K.
\end{align}
From Equation (\ref{eq: multiplier_update}) it follows that 
\begin{align}\label{eq:lagrangian_boost}
    \lambda_k \geq \exp\{\eta \alpha\},\;\;\text{for all } k=\tilde{k}+1,...,K.
\end{align}
Now observe, that for any $\beta \in (0,b_{max})$ and for any 
\begin{align}\label{eq:lmbd_k}
    \lambda_k \geq \frac{1}{b_{max} -\beta}
\end{align} 
Algorithm \ref{alg:online_opt} always allocates item $t_j$ with bid $b(t_j)$ at least $\beta$, i.e., $b(t_j)\geq \beta$. 
Indeed, let $\tilde{t}_j$ be an item with the maximal bid  $b(\tilde{t}_j)=b_{max}$. 
Then, from Inequality (\ref{eq:lmbd_k}) for any item $t_j$ we have:
\begin{align}
    \lambda_k (b_{max}-\beta) \geq 1 \geq p_{\theta}(t_j | x, t_{1:k-1}) - p_{\theta}(\tilde{t}_j | x, t_{1:k-1}).
\end{align}
Rearranging, we obtain: 
\begin{align}
    \lambda_k b_{max} + p_{\theta}(\tilde{t}_j | x, t_{1:k-1})  \geq &p_{\theta}(t_j | x, t_{1:k-1}) + \lambda_k \beta \geq \nonumber\\
    &p_{\theta}(t_j | x, t_{1:k-1}) + \lambda_k b(t_j)\\
    &\text{for all}\; t_j: b(t_j) \leq \beta.\nonumber
\end{align} 
Consequently, from line 4 of Algorithm \ref{alg:online_opt} it follows that as long as Inequality (\ref{eq:lmbd_k}) holds, any item $t_j$ with bid $b(t_j)< \beta$ cannot be allocated.
Therefore, if we let $\lambda_k$ on iteration $\tilde{k}$ be at least $1/(b_{max}-\beta)$, then for all consequent iterations we can collect the reward of at least $(K-\tilde{k})(\beta-R/K)$. 
In this case, the total constraint violation must be:
\begin{align}\label{eq:bound} 
    \underbrace{-\sum_{k=1}^{\Tilde{k}} r_{k}}_{\text{$\leq \alpha$, Equation (\ref{eq:alpha})}}  -\sum_{k=\Tilde{k}+1}^{K} r_{k} \leq \alpha - (K-\tilde{k}) \Big(\beta - \frac{R}{K}\Big)\leq
    \beta\tilde{k} - K\Big(\beta - \frac{R}{K}\Big). 
\end{align}
Here, we used the fact that at step $\tilde{k}$ the ongoing constraint violation $\alpha$ cannot exceed $\tilde{k}\cdot\frac{R}{K}$, i.e., $\alpha\leq  \tilde{k}\cdot\frac{R}{K}$.
Now, observe that in order to reach $\lambda_k \geq \frac{1}{b_{max}-\beta}$ (see Inequality (\ref{eq:lmbd_k})) Algorithm \ref{alg:online_opt} needs to make at least $\tilde{k}$ steps, where
\begin{align}
    \tilde{k} = \frac{K \log \lambda_k}{\eta R}=-\frac{K \log(b_{max}-\beta)}{\eta R}.
\end{align}
Consequently, we can bound Inequality (\ref{eq:bound}) further:
\begin{align}
    -\sum_{k=1}^K r_k \leq -\beta\tilde{k} - K\Big(\beta - \frac{R}{K}\Big) \leq &-\beta \frac{K  \log(b_{max}-\beta)}{\eta R} - K\Big(\beta - \frac{R}{K}\Big)=\nonumber\\
    & R - K\beta - \frac{K\beta}{R} \frac{\log(b_{max}-\frac{R}{K})}{\eta}.
\end{align}
Let us now choose $\beta$ so that $K \beta = R$.
We then obtain:
\begin{align}
    -\sum_{k=1}^K r_k \leq -\frac{\log(b_{max}-\frac{R}{K})}{\eta}.
\end{align}
Q.E.D.\qed
}

\vspace{15pt}
\subsubsection{Regret Bound}\label{app:regret}
To derive the bound on the regret reached by Algorithm \ref{alg:online_opt} we follow the general framework that was first outlined in \cite{Balseiro_2023} and then used in \cite{Feng_pacing}, which we generalise for our stochastic temporally consistent setting (Assumption \ref{as:temp_cons}).
To this end, we first compute an upper bound on the expected reward reached by optimal allocation (Statement \ref{th:OPT_bound}). 
Then, in Statement \ref{th:alg_bound} we provide a lower bound on the reward achieved by Algorithm \ref{alg:online_opt}.
Combining these two statements we obtain the final regret bound.

The following upper bound on the optimal reward follows directly from the dualised Problem~(\ref{eq:dualised}), where we use Assumption \ref{as:temp_cons} to substitute the temporally dependent coefficients $p_{\theta}(t_j| x, t_{1:k-1})$ of the constraint optimisation Problem~(\ref{eq:max_usat_online_1}) with the temporally independent $p_{\theta}(t_j | x)$.

\begin{restatable}{statement}{OPTbound}\label{th:OPT_bound}
    Expected reward $\mathbf{E}_{p_{\theta},b}[Reward(OPT)]$ of the optimal allocation is upper bounded by 
    \begin{align}
        \mathbf{E}_{p_{\theta}, b}[Reward(OPT)] \leq \inf_{\lambda\geq 0} K \cdot \mathbf{E}_{p_{\theta}, b} \Big[s_{k}^*(\lambda)\Big] + \frac{2}{3}\sigma_xK^{3/2} \sqrt{2\log\frac{2MK}{\delta}}
    \end{align}
    with probability at least $1-\delta$, where $s_k^*(\lambda) = \max_{j}\{ p_{\theta}(t_j|x) + \lambda (b(t_j) - \frac{R}{K}) \}$.
\end{restatable}
\begin{proof}
    First, from Assumption \ref{as:temp_cons} it follows that for any item $t_j$ its relevance score at position $k$ is
    \begin{align}
        p_{\theta}(t_j|x,t_{1:k-1}) = p_{\theta}(t_j | x) + \sum_{\ell=1}^{k-1} \epsilon_{\ell j}(x), \; k=1,...,K.
    \end{align}
    Let $\mathcal{E}_{kj}(x)=\sum_{\ell=1}^{k-1} \epsilon_{\ell j}(x)$, and observe that  $\mathbf{E}_{\epsilon}[\mathcal{E}_{kj}(x)] = 0$.
    Therefore, for any allocation $a_{kj}$, we have
    \begin{align} \label{eq:tot_rel}
        \sum_{k=1}^K \sum_{j=1}^M p_{\theta}(t_j | x, t_{1:k-1}) a_{kj} = \sum_{k=1}^K \sum_{j=1}^M p_{\theta}(t_j | x) a_{kj} + \sum_{k=1}^K \sum_{j=1}^M \mathcal{E}_{kj}(x) a_{kj}.
    \end{align}

    Observe, that $\sum_{k=1}^K\sum_{j=1}^M \mathcal{E}_{kj}(x) a_{kj} \leq \sum_{k=1}^K\max_{j=1,...,M}\mathcal{E}_{kj}(x)$.
    Notice also that $\mathcal{E}_{kj}(x)\sim\mathcal{N}(0,(k-1)\sigma_x^2)$, and, consequently, $\frac{\mathcal{E}_{kj}(x)}{\sigma_x\sqrt{k-1}} \sim\mathcal{N}(0,1)$.
    We now can bound the tail of $\mathcal{E}_{kj}(x)$:
    \begin{align}
        \Pr( |\mathcal{E}_{kj}(x)| \geq \sigma_x\sqrt{k-1}\cdot t ) \leq 2 e^{-t^2/2}.
    \end{align}
    Let $t=\sqrt{2\log\frac{2MK}{\delta}}$. 
    We then obtain 
    \begin{align}
        \Pr\Big( |\mathcal{E}_{kj}(x)| \geq \sigma_x\sqrt{k-1}\cdot\sqrt{2\log\frac{2MK}{\delta}}\Big) \leq \frac{\delta}{MK}.
    \end{align}
    Now, using the union bound, we obtain:
    \begin{align}
        |\mathcal{E}_{kj}(x)| \leq \sigma_x\sqrt{k-1}\sqrt{2\log\frac{2MK}{\delta}}
    \end{align}
    holds for all $k=1,...,K$, $j=1,...,M$ with probability at least $1-\delta$.
    Thus,
    \begin{align}\label{eq:azuma}
        \sum_{k=1}^K\sum_{j=1}^M \mathcal{E}_{kj}(x) a_{kj} \leq \sum_{k=1}^K\max_{j=1,...,M}|\mathcal{E}_{kj}(x)|\leq \\
        \sigma_x\cdot\sqrt{2\log\frac{2MK}{\delta}}\sum_{k=1}^K \sqrt{k-1}\leq \frac{2}{3}\sigma_x K^{3/2} \sqrt{2\log\frac{2MK}{\delta}} \nonumber
    \end{align}
    with probability at least $1-\delta$.    
    
    Substituting Equation (\ref{eq:azuma}) into Equation (\ref{eq:tot_rel}) we get:
    \begin{align}\label{eq:eq_37}
        \sum_{k=1}^K \sum_{j=1}^M p_{\theta}(t_j | x, t_{1:k-1}) a_{kj} \leq \sum_{k=1}^K \sum_{j=1}^M p_{\theta}(t_j | x) a_{kj} + \frac{2}{3}\sigma_xK^{3/2} \sqrt{2\log\frac{2MK}{\delta}}
    \end{align}
    with probability at least $1-\delta$.
    Notice, that the dualised Problem~(\ref{eq:dualised}) bounds the optimal value of the original Problems~(\ref{eq:max_usat_online_1})-(\ref{eq:feasibility_constr_1}) from above.
    Using Inequality (\ref{eq:eq_37}) and Equation (\ref{eq:dualised}) we obtain:
    \begin{align}\label{eq:reward_opt_bound}
        Reward(OPT)\leq\inf_{\lambda\geq 0}\max_{a_{kj}\in\{0,1\}} \Big[ &\sum_{k=1}^K \sum_{j=1}^M \Big(p_{\theta}(t_j | x) + \lambda\Big(b(t_j)-\frac{R}{K}\Big)\Big) a_{kj} + \\
        &\frac{2}{3}\sigma_xK^{3/2} \sqrt{2\log\frac{2MK}{\delta}}\Big]\nonumber\\
     \text{s.t.,}\;\;&\sum_{k=1}^K a_{kj} \leq 1\;\;\forall j=1,...,M,
    \end{align}
    with probability at least $1-\delta$.
    We define $s_k^*(\lambda) = \max_{j}\{ p_{\theta}(t_j|x) + \lambda (b(t_j) - \frac{R}{K}) \}$, and rewrite Inequality (\ref{eq:reward_opt_bound}) as follows:
    \begin{align}\label{eq:reward_opt_bound_instance}
        Reward(OPT)\leq \inf_{\lambda\geq 0} \sum_{k=1}^K s_{k}^*(\lambda) + \frac{2}{3}\sigma_xK^{3/2} \sqrt{2\log\frac{2MK}{\delta}},
    \end{align}
    with probability at least $1-\delta$. 
    Taking the expectation of Equation (\ref{eq:reward_opt_bound_instance}) 
    concludes the proof. Q.E.D. 
\end{proof}

\vspace{15pt}
We now derive the lower bound on the reward of Algorithm \ref{alg:online_opt}.
To this end, we observe that at each iteration Algorithm \ref{alg:online_opt} chooses the item with the largest score $\sigma(t_j)$ (lines 3-4), which (using Azuma's inequality) we bound by $s_k^*(\lambda_k)$.

\begin{restatable}{statement}{AlgBound}\label{th:alg_bound}
    Expected reward $\mathbf{E}_{p_{\theta},b}[Reward(\mathcal{A})]$ of Algorithm \ref{alg:online_opt} is lower bounded by 
    \begin{align}
        \mathbf{E}_{p_{\theta}, b}[Reward(\mathcal{A})] \geq \mathbf{E}_{p_{\theta}, b}[\sum_{k=1}^K s_k^*(\lambda_k)] - \mathbf{E}_{p_{\theta}, b}[\sum_{k=1}^K \lambda_k r_k] -\sqrt{2K\ln \frac{2}{\gamma}}\nonumber
    \end{align}
    with probability at least $\gamma$.
\end{restatable}
\begin{proof}
    Let $\pi_k$ be the value allocated at iteration $k$ of Algorithm \ref{alg:online_opt}, i.e., $\pi_k = p_{\theta}(t_j | x, t_{1:k-1})$ for $j$, s.t. $ a_{kj}=1$.
    From line 4 of Algorithm \ref{alg:online_opt} it follows that at each iteration $k$ of the algorithm $\pi_k$ must be
    \begin{align}
        \pi_k = &\max_j \Big\{ p_{\theta}(t_j|x, t_{1:k-1}) + \lambda_k \Big(b_j - \frac{R}{K}\Big) \Big\} - \lambda_k r_k =\nonumber \\
        &\max_j \Big\{ p_{\theta}(t_j|x) + \mathcal{E}_{kj}(x) + \lambda_k \Big(b_j - \frac{R}{K}\Big) \Big\} - \lambda_k r_k \geq\nonumber \\
        &\max_j \Big\{ p_{\theta}(t_j|x) + \lambda_k \Big(b_j - \frac{R}{K}\Big) \Big\} + \mathcal{E}_{kj^*}(x) - \lambda_k r_k, 
    \end{align}
    where $j^*\in\arg\max\Big\{p_{\theta}(t_j|x) +  \lambda_k \Big(b_j - \frac{R}{K}\Big)\Big\}$ and $r_k$ is defined in Equation (\ref{eq:marginal_reward}).
    We can now rewrite:
    \begin{align}
        \pi_k \geq s_k^*(\lambda_k) - \lambda_k r_k + \mathcal{E}_{kj^*}(x),
    \end{align}
    and, consequently,
    \begin{align}
        \sum_{k=1}^K \pi_k \geq \sum_{k=1}^K \Big( s_{k}^*(\lambda_k) - \lambda_k r_k\Big) + \sum_{k=1}^K \mathcal{E}_{kj^*}(x) \geq\nonumber \\
        \sum_{k=1}^K \Big( s_{k}^*(\lambda_k) - \lambda_k r_k\Big) - \frac{2}{3}\sigma_x K^{3/2} \sqrt{2\log\frac{2MK}{\gamma}},
    \end{align}
    with probability at least $1-\gamma$
    (here, similarly to Statement \ref{th:OPT_bound} we used Equation (\ref{eq:azuma}) and the union bound).
    Now, taking the expectation of both sides, we obtain:
    \begin{align}
        \mathbf{E}_{p_{\theta}, b}[Reward(\mathcal{A})] = &\mathbf{E}_{p_{\theta}, b}[\sum_{k=1}^K \pi_k] \geq \\
        \mathbf{E}_{p_{\theta}, b}[\sum_{k=1}^K s_k^*(\lambda_k)] - &\mathbf{E}_{p_{\theta}, b}[\sum_{k=1}^K \lambda_k r_k] -\frac{2}{3}\sigma_x K^{3/2} \sqrt{2\log\frac{2MK}{\gamma}}\nonumber
    \end{align}
    with probability at least $1-\gamma$. Q.E.D.
\end{proof}

We can now combine Statement \ref{th:OPT_bound} and Statement \ref{th:alg_bound} to derive the following upper bound on the regret of Algorithm \ref{alg:online_opt}:

\begin{restatable}{lemma}{alg_bound}\label{th:lemma}
    The regret of Algorithm \ref{alg:online_opt} is upper bounded by
    \begin{align}\label{eq:lemma}
        Regret(\mathcal{A}) \leq \mathbf{E}_{p_{\theta}, b}[\sum_{k=1}^K \lambda_k r_k] + \frac{4}{3}\sigma_x K^{3/2} \sqrt{2\log\frac{2MK}{\delta}}
    \end{align}
    with probability at least $1-\delta$. 
\end{restatable}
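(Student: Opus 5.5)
The plan is to subtract the lower bound of Statement~\ref{th:alg_bound} from the upper bound of Statement~\ref{th:OPT_bound} and show that the dual terms cancel. Regret is defined as $\mathbf{E}_{p_{\theta},b}[Reward(OPT)] - \mathbf{E}_{p_{\theta},b}[Reward(\mathcal{A})]$.

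\textbf{Step 1: upper bound on OPT.} By Statement~\ref{th:OPT_bound}, with probability at least $1-\delta/2$,
\[
\mathbf{E}[Reward(OPT)] \leq \inf_{\lambda\geq 0} K\,\mathbf{E}[s_k^*(\lambda)] + \tfrac{2}{3}\sigma_x K^{3/2}\sqrt{2\log\tfrac{4MK}{\delta}}.
\]
The key observation is the following. Under Assumption~\ref{as:temp_cons} the base relevances $p_{\theta}(t_j|x)$ are fixed across positions, so $s_k^*(\lambda)$ does not depend on $k$. The infimum over $\lambda$ is therefore attained at a single, position-independent value. Consequently, for the realised sequence $\lambda_1,\dots,\lambda_K$ produced by Algorithm~\ref{alg:online_opt},
\[
\inf_{\lambda\geq 0} K\,\mathbf{E}[s^*(\lambda)] \leq \sum_{k=1}^K \mathbf{E}[s_k^*(\lambda_k)],
\]
since each summand on the right is at least the infimum. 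This is the step that allows us to compare against the algorithm's own duals rather than a fixed $\lambda$. Because the $\lambda_k$ are random, I would make it rigorous by applying the pointwise inequality $\inf_\lambda s^*(\lambda)\leq s^*(\lambda_k)$ inside the expectation. This means working with the instance-wise bound \eqref{eq:reward_opt_bound_instance} before taking expectations, and I expect it to be the main technical point to verify.

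\textbf{Step 2: lower bound on the algorithm.} By Statement~\ref{th:alg_bound}, using the Gaussian-tail version from its proof, with probability at least $1-\delta/2$,
\[
\mathbf{E}[Reward(\mathcal{A})] \geq \mathbf{E}\Big[\sum_k s_k^*(\lambda_k)\Big] - \mathbf{E}\Big[\sum_k \lambda_k r_k\Big] - \tfrac{2}{3}\sigma_x K^{3/2}\sqrt{2\log\tfrac{4MK}{\delta}}.
\]

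\textbf{Step 3: combine.} Subtracting the bound of Step~2 from that of Step~1 cancels the $\sum_k \mathbf{E}[s_k^*(\lambda_k)]$ terms. What remains is $\mathbf{E}[\sum_k\lambda_k r_k]$ plus two identical concentration terms, which sum to the $\frac{4}{3}\sigma_x K^{3/2}\sqrt{2\log(\cdot)}$ term. A union bound over the two high-probability events gives probability at least $1-\delta$.

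\textbf{Logarithmic constant.} Splitting $\delta$ into two halves changes the logarithm from $\log\frac{2MK}{\delta}$ to $\log\frac{4MK}{\delta}$, which is absorbed up to constants. Alternatively, one can note that both statements rely on the same Gaussian event, namely the uniform bound on $|\mathcal{E}_{kj}(x)|$ over all $k,j$. A single union bound over that one event, at level $\delta$, then suffices and gives the stated constant exactly. I would take this second route, since it reproduces the exact form of \eqref{eq:lemma}.
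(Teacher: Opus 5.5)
Your proposal is correct and is essentially the paper's argument. You lower-bound the algorithm via the pointwise inequality $K\inf_{\lambda\ge 0}s_k^*(\lambda)\le\sum_{k=1}^K s_k^*(\lambda_k)$, cancel against the upper bound on the optimum, and collect two copies of the Gaussian term on the single event $\{|\mathcal{E}_{kj}(x)|\le\sigma_x\sqrt{k-1}\sqrt{2\log\frac{2MK}{\delta}}\ \forall k,j\}$, which gives the stated constant.

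Commit to the instance-wise route you describe and drop the displayed inequality $\inf_{\lambda}K\,\mathbf{E}[s^*(\lambda)]\le\sum_k\mathbf{E}[s_k^*(\lambda_k)]$. That inequality does not follow, because $\lambda_k$ depends on the same realised $p_{\theta}(\cdot|x)$ and $b$. Taking expectations of \eqref{eq:reward_opt_bound_instance} instead yields the optimum's bound with $\mathbf{E}[\inf_{\lambda}\cdot]$. That is what the cancellation needs, and it is also what the paper's comparison with Statement~\ref{th:OPT_bound} (stated with $\inf_{\lambda}\mathbf{E}[\cdot]$) tacitly uses.
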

\begin{proof}
    First, from Statement \ref{th:OPT_bound} it follows that for any $\lambda_k$
    \begin{align}
        K \inf_{\lambda\geq 0} s_k^*(\lambda) \leq \sum_{k=1}^{K} s_k^*(\lambda_k),
    \end{align}

    We can, therefore, rewrite the lower bound on the reward of Algorithm \ref{alg:online_opt} from Statement \ref{th:alg_bound} as follows:
    \begin{align}
        \mathbf{E}_{p_{\theta}, b}[Reward(\mathcal{A})] \geq K&\cdot \mathbf{E}_{p_{\theta}, b}\Big[\inf_{\lambda\geq 0}s_k^*(\lambda)\Big] - \mathbf{E}_{p_{\theta}, b}[\sum_{k=1}^K \lambda_k r_k] \\
        -&\frac{2}{3}\sigma_x K^{3/2} \sqrt{2\log\frac{2MK}{\delta}}
    \end{align}
    with probability at least $1-\delta$.
    Comparing this with Statement \ref{th:OPT_bound}, we conclude:
    \begin{align}
        \mathbf{E}_{p_{\theta}, b}[Reward(OPT)]  - &\mathbf{E}_{p_{\theta}, b}[Reward(\mathcal{A})] \geq \\
        &\mathbf{E}_{p_{\theta}, b}[\sum_{k=1}^K \lambda_k r_k] + \frac{4}{3}\sigma_x K^{3/2} \sqrt{2\log\frac{2MK}{\delta}}\nonumber
    \end{align}
    with probability at least $1-\delta$. Q.E.D.
\end{proof}

Lemma \ref{th:lemma} is intuitive: 
Remember that the Lagrangian multiplier $\lambda$ of Constraint (\ref{eq:budget_constr}) can be thought as the ``shadow price" of the unit of the constraint expressed in the units of Objective (\ref{eq:max_usat_online_1}), \cite{MasColell}.
In this case, if we think of $\lambda_k$ as iterative approximations of $\lambda$, then each iteration of Algorithm \ref{alg:online_opt} imposes the ``cost" of approximately $\lambda_k r_k$ on the objective, and consequently, the total expected cost must be approximately $\mathbf{E}_{p_{\theta}, b}[\sum_{k=1}^K \lambda_k r_k]$, see the first term in Inequality (\ref{eq:lemma}).
The second term in Inequality (\ref{eq:lemma}) bounds the potential value loss due to the temporal consistency assumption. 

We now rely on this Lemma to prove our final regret bound:

\algRegret*
\begin{proof}
    We rely on Lemma \ref{th:lemma}, where we further bound $\sum_{k=1}^K \lambda_k r_k$.
    Observe, that both $r_k$ and $\lambda_k$ are always bounded.
    We let $|r_k|\leq \bar{r}$ and $\lambda_k \leq \bar{\lambda}$, and
    we set $\eta$ so that $|\eta r_k|\leq 1$ From Equation (\ref{eq:eq_one_iter_update}) it follows
    \begin{align}
        \lambda_{k+1} \leq \lambda_k (1-\eta r_k +\eta^2 r_k^2).
    \end{align}
    Consequently,
    \begin{align}
        \lambda_k r_k \leq \frac{\lambda_k-\lambda_{k+1}}{\eta} + \eta \lambda_k r_k^2,
    \end{align}
    and
    \begin{align}
        \sum_{k=1}^K\lambda_k r_k \leq \frac{\lambda_0-\lambda_{K}}{\eta} + \eta\sum_{k=1}^K \lambda_k r_k^2\leq \frac{1}{\eta} + \eta K \bar{\lambda}\bar{r}^2
    \end{align}
    Taking the expectation and substituting this into Lemma \ref{th:lemma} concludes the proof. Q.E.D.
\end{proof}

\subsection{Datasets \& Model Training Details}

\subsubsection{Playlist Ranking}\label{app:playlist_dataset}

Below we provide additional details about our playlist ranking dataset. 
Concretely, we constructed a sequential dataset that matches the requirements of the original Sasrec paper \cite{sasrec_paper}, where the \textit{tokens} predicted by the model correspond to the tracks streamed by users.
Using logged history of user/track interactions we constructed chronologically ordered sequences of 100 tokens of the users' past interactions, and used these sequences to optimise SASRec model for the next-item prediction task. 
We implemented SASRec model following the original paper \cite{sasrec_paper}. 
Specifically, we mapped the input tokens corresponding to track IDs to 80-dimensional embeddings. 
We then passed these embeddings to the two blocks of multi-head attention layers with 5 heads, resulting in approximately 250M parameter model.
We trained the model using negative in-batch sampling with 100 negative labels per each positive label on a cluster of eight A100 GPUs.

\subsubsection{Home Shelves Ranking Ranking}\label{app:home_shelves}
We relied on the original TransAct paper \cite{transact} for the Home Shelves ranking task.
Specifically, we relied on input sequences of up to 128 past users' interactions that included the item URIs as well as a few contextual features (hour, weekday, shelf metadata, region, etc), the type of the item (music, video, etc).
We mapped all input tokens to 64-dimensional embeddings and encoded the sequential history using a 4-layer 4-head Transformer with a 3-hour temporal attention window. 
Our model combines the resulting sequence representation with candidate and context features to predict the probability that a shelf is streamed.
The model has been trained with Adam optimiser and a batch size of 2048 for six epochs and a 90/10 train/validation split. \cite{transact}.

\subsubsection{Amazon e-Commerce Data}\label{app:amazon}
We relied on a 5M user-item interactions subset of \textit{Sports and Outdoors} dataset from \cite{amazon_data}, where we filtered out users with less than five interactions. 
We used Sentence-T5 model \cite{Sentence_T5} to compute the semantic embeddings of the different items using their titles provided in the dataset. 
In this case, each semantic embedding corresponds to a 768-dimensional latent vector which we further processed with Locality-Sensitive Hashing algorithm (LSH) \cite{Charikar_2002} to compute  Semantic IDs of the items.
Each Semantic ID in our experiment corresponds to three 8-bits integer semantic tokens.

To define Constraint (\ref{eq:budget_constr}) we selected two sub-categories of items, namely, \textit{`Camping \& Hiking'} and \textit{`Cycling'} the exposure of which we would like to increase in the recommendation slates. 
These two sub-categories constitute approximately 15\% of the unique items in the dataset.
For each item in our dataset we introduced a binary score $b(t_j)=1$ if item $t_j$ belongs to any of these two sub-categories and $b(t_j)=0$ otherwise.

To build the transformer-based RS we relied on a pre-trained Llama-3.2-1B model that we further fine-tuned using the past sequences of the user-item interactions.
Specifically, for each user we tokenized the user and the item specific information (i.e., items Semantic IDs), and fine-tuned the model on the next token prediction task. 

Observe, that as in this e-Commerce domain the model operates with semantic tokens while the attribute scores $b(t_j)=1$ are defined on the item-level. 
To allow Algorithm \ref{alg:online_opt} process semantic tokens, we inferred the token-level attributes scores from the respective item-level scores. 
Specifically, we let $s_j = [s_j^1,...,s_j^L]$ be Semantic ID of item $t_j$ that consists of $L$ semantic tokens $s_j^1,...,s_j^L$ (in our experiments $L=3$).
We define the token-level attribute score $b(s_j^{\ell})$ of token $s_j^{\ell}$ as the expected value of the Semantic ID that contains this token, i.e., 
\begin{align}
    b(s_j^{\ell}) = \sum_{i=1}^M b(t_i) \cdot \Pr(t_i | s_j^1,...,s_j^{\ell}),
\end{align}
where
\begin{align}
    \Pr(t_i | s_j^1,...,s_j^{\ell}) = \frac{1}{\sum_{k=1}^M \mathbf{1}_{\{s_k^1=s_j^1,...,s_k^{\ell}=s_j^{\ell}\}}}.
\end{align}
Intuitively, when we allocate the very first token $s_{j}^1$ of a certain Semantic ID, we look into all items $t_j$ than may be generated from this token, and average their expected attribute score.
Our results in Section \ref{sec:offline_evaluation} illustrate that primal-dual decoding is equally efficient in the semantic ID space as in the regular item ID space.



\end{document}